\theoremstyle{plain}
\newtheorem{theorem}{Theorem}[section]
\theoremstyle{definition}
\theoremstyle{remark}
 \icmltitlerunning{SafeRL-Kit: Evaluating Efficient Reinforcement Learning Methods for  Safe Autonomous Driving}
\begin{document}

\twocolumn[
\icmltitle{SafeRL-Kit: Evaluating Efficient  Reinforcement Learning Methods\\ for Safe Autonomous Driving}



\icmlsetsymbol{equal}{*}

\begin{icmlauthorlist}
\icmlauthor{Linrui Zhang}{equal,yyy}
\icmlauthor{Qin Zhang}{equal,yyy}
\icmlauthor{Li Shen}{comp}
\icmlauthor{Bo Yuan}{yyy}
\icmlauthor{Xueqian Wang}{yyy}
\end{icmlauthorlist}

\icmlaffiliation{yyy}{Tsinghua Shenzhen International Graduate School, Tsinghua University, Beijing, China.}
\icmlaffiliation{comp}{JD Explore Academy, Beijing, China}

\icmlcorrespondingauthor{Xueqian Wang}{wang.xq@sz.tsinghua.edu.cn}

\icmlkeywords{Machine Learning, ICML}

\vskip 0.3in
]



\printAffiliationsAndNotice{\icmlEqualContribution} 

\begin{abstract}
Safe reinforcement learning (RL) has achieved significant success on risk-sensitive tasks and shown promise in autonomous driving (AD) as well. Considering the distinctiveness of this community, efficient and reproducible baselines are still lacking for safe AD. In this paper, we release SafeRL-Kit to benchmark safe RL methods for AD-oriented tasks. Concretely, SafeRL-Kit contains several latest algorithms specific to zero-constraint-violation tasks, including Safety Layer, Recovery RL, off-policy Lagrangian method, and Feasible Actor-Critic. In addition to existing approaches, we propose a novel first-order method named Exact Penalty Optimization (EPO) and sufficiently demonstrate its capability in safe AD. All algorithms in SafeRL-Kit are implemented (i) under the off-policy setting, which improves sample efficiency and can better leverage past logs; (ii) with a unified learning framework, providing off-the-shelf interfaces for researchers to incorporate their domain-specific knowledge into  fundamental safe RL methods. Conclusively, we conduct a comparative evaluation of the above algorithms in SafeRL-Kit and shed light on their efficacy for safe autonomous driving. The source code is available at \href{ https://github.com/zlr20/saferl_kit}{this https URL}. 
\end{abstract}
\section{Introduction}
\label{sec1}

 \begin{figure*}
      \centering
        \includegraphics[width=0.775\linewidth]{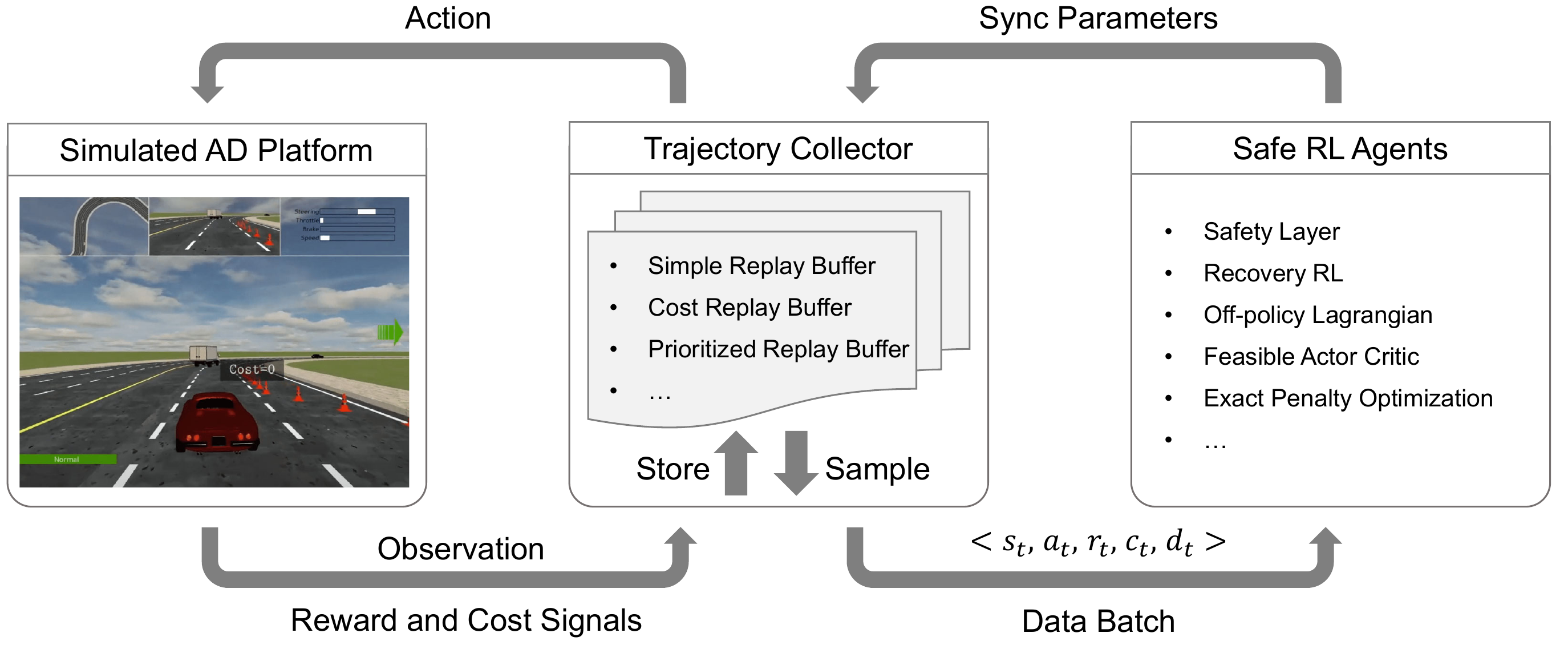}
      \caption{The overall framework of SafeRL-Kit. The trajectory collector interacts with specified AD environments (e.g., MetaDrive~\citep{li2021metadrive}) and stores transitions in the memory. SafeRL-Kit contains several safe RL agents that efficiently learn from past experiences, including Safety Layer, Recovery RL, Off-policy Lagrangian, Feasible Actor Critic, and newly proposed Exact Penalty Optimization.}
      \label{fig:framework}
      \vspace{-0.15cm}
\end{figure*}

Reinforcement Learning (RL) has achieved superhuman performance in many decision-making problems~\citep{mnih2015human,vinyals2019grandmaster}. Typically, the agent learns from trial and error and requires minimal prior knowledge of the environment. Such a paradigm has natural advantages in mastering complex skills for highly nonlinear systems like autonomous vehicles~\citep{kiran2021deep}.

Nevertheless, concerns about the systematic safety limit the widespread use of standard RL in real-world applications~\citep{amodei2016concrete}.  As an alternative, safe RL takes safety requirements as hard constraints and optimizes policies in the feasible domain. In recent years, it has been deemed as a practical solution to resource allocation~\citep{liu2021resource}, robotic locomotion~\citep{yang2022safe}, etc. 

There have also been studies introducing safe RL into autonomous driving (AD)~\citep{isele2018safe,chen2021safe,li2022efficient}. Despite those ongoing efforts, a unified benchmark is of great relevance to facilitate further research on safe AD. We notice some risk-sensitive simulated environments~\citep{li2021metadrive,herman2021learn} have been proposed, but an efficient safe RL toolkit is still absent for this community. Considering the distinctiveness of AD-oriented tasks,  common code-bases~\citep{ray2019benchmarking,yuan2021safe} lack the following pivotal characteristics:
\begin{description}[leftmargin=0em]
    \item [(1) Being safety-critical.] The agent must maintain zero cost-return as much as possible since any inadmissible behavior in autopilot leads to catastrophic failures. Instead, the previous code-base is built for a general-purpose with trajectory-based constraints and non-zero thresholds.
    \item [(2) Being sample-efficient.] Off-policy algorithms can better leverage past logs and human demonstrations, which is crucial for AD. By contrast, the previous code-base requires tens of millions of interactions due to its on-policy algorithms, like CPO and PPO-L~\citep{ray2019benchmarking}.
    \item [(3) Being up-to-date.] There has been a fast-growing body of RL-based safe control. Nevertheless, the previous code-base merely contains elder baselines~\cite{achiam2017constrained,chow2017risk} and lacks the latest advances.
    \item [(4) Being easy-to-use.] Most work on learning-based safe AD tends to incorporate domain-specific knowledge into fundamental safe RL. Thus the toolkit is supposed to provide off-the-shelf interfaces for extended studies. However, the modules of the previous code-base are highly coupled and are implemented with the deprecated TensorFlow version.
\end{description}

To provide a such as toolkit for safe RL algorithms and understand which of them are best suited for AD-oriented tasks, our contributions in this work are summarized as the following three-folds:
\begin{itemize}
    \item We release SafeRL-Kit, which contains the latest advances in safe RL~\cite{dalal2018safe,ha2020learning,thananjeyan2021recovery,ma2021feasible}. All algorithms are implemented efficiently under off-policy settings and with a unified training framework.
    \item We propose a novel first-order method coined Exact Penalty Optimization (EPO) and incorporate it into SafeRL-Kit. EPO utilizes a single penalty factor and a ReLU operator to construct an equivalent unconstrained objective. Empirical results show the simple technique is surprisingly effective and robust for AD-oriented tasks. 
    \item We benchmark SafeRL-Kit in a representative toy environment and a simulated platform with realistic vehicle dynamics. To the best of our knowledge, this paper is the first to provide unified off-policy safe RL baselines and a fair comparison of them specific to AD.
\end{itemize}

\section{Related Work}
\subsection{Safe RL Algorithms}
A number of works tackle RL-based safe control for autonomous agents, and we divide them into three genres. The first type of method, coined as safe policy optimization, incorporates safety constraints into the standard RL objective and yields a constrained sequential optimization problem~\citep{chow2017risk,achiam2017constrained,zhang2020first,ma2021feasible,Zhang2022PenalizedPP}.  The second type of method, coined as safety correction, projects initial unsafe behaviors to the feasible region~\citep{dalal2018safe,zhao2021model}. The third type of method, coined as safety recovery, learns an additional pair of safe actor-critic to take over control when encountering potential risks~\citep{thananjeyan2021recovery,yang2022safe}.

There have also been studies on safe RL specific to AD-oriented tasks. \citet{isele2018safe} utilizes a prediction module to generate masks on dangerous behaviors, which merely works in discrete action spaces. \citet{wen2020safe} extend Constrained Policy Optimization (CPO)~\citep{achiam2017constrained} to AD  and employ synchronized parallel actors to accelerate the convergence speed for on-policy CPO. \citet{chen2021safe} take the ego-camera view as input and train an additional recovery policy via a heuristic objective based on Hamilton-Jacobi reachability. \citet{li2022efficient} propose a human-in-loop approach to learn safe driving efficiently.

\subsection{Safe RL Benchmarks}
For general scenarios, a set of benchmarks are commonly used to evaluate the efficacy of safe RL algorithms.
The classic environments\footnote{https://github.com/SvenGronauer/Bullet-Safety-Gym} include Robot with Limit Speed~\citep{zhang2020first}, Circle and Gather~\citep{achiam2017constrained}, etc. 
Safety-gym\footnote{https://github.com/openai/safety-gym}~\citep{ray2019benchmarking} contains several tasks (goal, button, push) and agents (point, car, doggo) that are representative in robot control problems. Meanwhile, the authors provide popular baselines\footnote{https://github.com/openai/safety-starter-agents}, including CPO and some on-policy Lagrangian methods.
Safe-control-gym\footnote{https://github.com/utiasDSL/safe-control-gym}~\citep{yuan2021safe} bridges the gap between control and RL communities. The authors also developed an open-sourced toolkit supporting both model-based and data-driven control techniques.

For AD-oriented tasks, there have been some existing environments for safe driving. \citet{li2021metadrive} release Metadrive\footnote{https://github.com/metadriverse/metadrive} that benchmarks reinforcement learning algorithms for vehicle autonomy, including safe exploitation and exploration. \citet{herman2021learn} propose Learn-to-Race\footnote{https://github.com/learn-to-race/l2r} that focuses on safe control in high speed. Nevertheless, it still lacks a set of strong baselines specific to the AD community considering its distinctiveness depicted above in Section~\ref{sec1}. To our best knowledge, this paper is the first to provide unified off-policy safe RL baselines and a fair comparison of them for the purpose of autonomous driving.
\section{Preliminaries}
\begin{figure}
    \centering
    \subcaptionbox{Cost Signal = 0\label{fig:env1_safe}}
    {\includegraphics[width=0.475\linewidth,height = 0.36\linewidth]{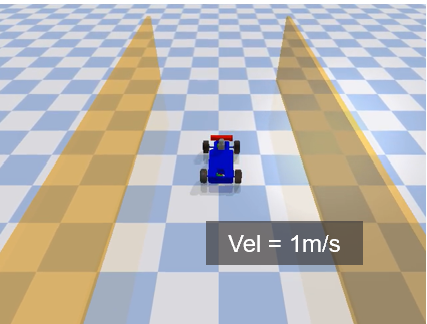}}
    \subcaptionbox{Cost Signal = 1\label{fig:env1_unsafe}}
    {\includegraphics[width=0.475\linewidth,height = 0.36\linewidth]{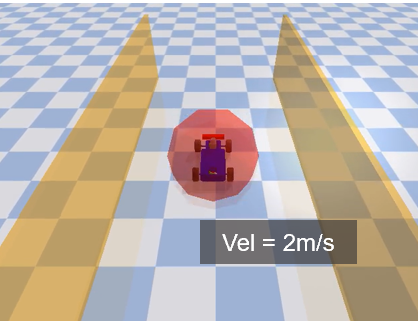}}
    \caption{SpeedLimit Benchmark. The vehicle is rewarded for driving along the avenue, but receives a cost signal if $vel > 1.5 m/s$.}
    \label{fig:env1}
    \vspace{-0.2cm}
\end{figure}

A Markov Decision Process (MDP)~\citep{sutton2018reinforcement} is defined by a tuple $(\mathcal{S},\mathcal{A},\mathcal{P},\mathcal{R},\mu,\gamma)$. ${\mathcal{S}}$ and ${\mathcal{A}}$ denote the state space and the action space respectively. ${\mathcal{P}}: {\mathcal{S}} \times \mathcal{A} \times  \mathcal{S} \mapsto [0,1]$ is the transition  probability  function to describe the dynamics of the system.  $\mathcal{R} : \mathcal{S} \times \mathcal{A} \mapsto \mathbb{R}$ is the reward function. 
$\mu:\mathcal{S} \mapsto [0,1]$ is the initial state distribution. $\gamma$ is the discount factor for future reward.
A stationary policy $\pi : S \mapsto P(A)$ maps the given states to probability distributions over action space.
The goal of standard RL is to find the optimal policy $\pi^*$ that maximizes the expected discounted return
$J_R(\pi) = \mathop{\mathbb{E}}_{\tau\sim \pi}\big [ \sum^\infty_{t=0}\gamma^t R(s_t,a_t)\big ],$ where $\tau=\{(s_t,a_t)\}_{t\ge0}$ is a sample trajectory and $\tau\sim\pi$ accounts for the distribution over trajectories depending on $s_0 \sim \mu, a_t \sim \pi(\cdot | s_t), s_{t+1} \sim P(\cdot | s_t,a_t)$.

A Constrained Markov Decision Process (CMDP)~\citep{altman1999constrained} extends MDP to $(\mathcal{S},\mathcal{A},\mathcal{P},\mathcal{R},\mathcal{C},\mu,\gamma)$.  The cost function $\mathcal{C} : \mathcal{S} \times \mathcal{A} \mapsto [0,+\infty] $ reflects the violation of systematic safety. 
The goal of safe RL is to find \[\pi^* = {\arg\max}_\pi J_R(\pi)\quad \mathrm{s.t.}\ \ \{a_t\}_{t\ge0} \text{ is feasible}.\]

In a CMDP, the cost function is typically constrained in the following two ways.
The first is  \emph{Instantaneous Constrained MDP}. This type of Safe RL formualtion requires the selected actions enforce the constraint at every decision-making step, namely
$C(s_t,a_t)\leq \epsilon$.
The second is \emph{Cumulative Constrained MDP}. This type of Safe RL formualtion requires the discounted sum of cost signals in the whole trajectory within a certain threshold, namely
$
J_C(\pi) = \mathop{\mathbb{E}}_{\tau\sim \pi}\big [ \sum^\infty_{t=0}\gamma^t C(s_t,a_t)\big ] \leq d.
$

\section{Problem Setup}

In this paper, we develop SafeRL-Kit to evaluate efficient RL algorithms for safe autonomous driving on existing benchmarks. We simplify the cost function as the following risk-indicator:
\begin{equation}
    C(s,a) = \begin{cases}
            1,& \text{if the transition is unsafe}\\
            0,  & \text{otherwise}
        \end{cases}.
\end{equation}
This formulation is generalizable to different AD-oriented tasks without cumbersome  reward and cost shaping. The goal of the autonomous vehicle is to reach the destination as fast as possible while adhering to zero cost signals at every time steps. Specifically, we conduct comparative evaluations on a representative toy environment and a simulated platform with realistic vehicle dynamics respectively.

\begin{figure}
    \centering
    \subcaptionbox{Cost Signal = 0\label{fig:env2_safe}}
    {\includegraphics[width=0.475\linewidth,height = 0.36\linewidth]{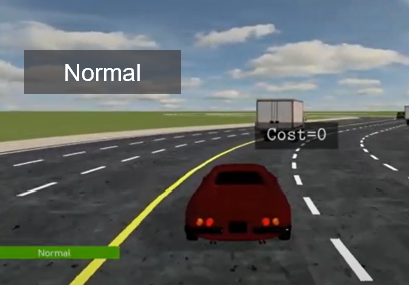}}
    \subcaptionbox{Cost Signal = 1\label{fig:env2_unsafe}}
    {\includegraphics[width=0.475\linewidth,height = 0.36\linewidth]{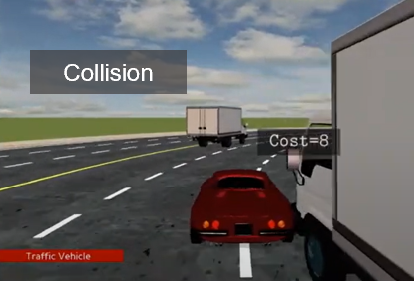}}
    \caption{MetaDrive Benchmark. The vehicle aims to reach virtual markers, but receives a cost signal if it collides with obstacles and other vehicles or it is out of the road.}
    \label{fig:env2}
    \vspace{-0.1cm}
\end{figure}

\begin{table*}[t]
\caption{Comparison of different safe reinforcement learning algorithms for AD-oriented tasks.}
\label{tab:comparision}

\begin{center}
\begin{small}
\begin{sc}
\resizebox{\textwidth}{21.5mm}{
\begin{tabular}{|c|cc|cc|}
\toprule
\specialrule{0em}{1pt}{1pt}
\multirow{2}*{Algorithm} & \multicolumn{2}{c|}{Constraint Type} &  \multicolumn{2}{c|}{Policy Type} \\
\specialrule{0em}{1pt}{1pt}
\cline{2-5}
\specialrule{0em}{1pt}{1pt}
~ & Cumulative/Instantaneous & State-wise/Trajectory-wise & Deterministic & Stochastic \\
\specialrule{0em}{1pt}{1pt}
\midrule
\specialrule{0em}{1pt}{1pt}
CPO~\citep{ray2019benchmarking} & Cumulative & Trajectory-wise  & $\times$ & $\surd$\\
\specialrule{0em}{1pt}{1pt}
PPO-L~\citep{ray2019benchmarking} & Cumulative & Trajectory-wise & $\times$ & $\surd$\\
\specialrule{0em}{1pt}{1pt}
TRPO-L~\citep{ray2019benchmarking} & Cumulative & Trajectory-wise & $\times$ & $\surd$ \\
\specialrule{0em}{1pt}{1pt}
Safety Layer & Instantaneous & State-wise & $\surd$ & $\times$\\
\specialrule{0em}{1pt}{1pt}
Recovery RL & Cumulative & State-wise & $\surd$ & $\surd$\\
\specialrule{0em}{1pt}{1pt}
Off-policy Lagrangian & Cumulative & Trajectory-wise & $\surd$ & $\surd$\\
\specialrule{0em}{1pt}{1pt}
Feasible Actor-Critic & Cumulative & State-wise & $\surd$ & $\surd$\\
\specialrule{0em}{1pt}{1pt}
Exact Penalty Optimization & Cumulative & Both & $\surd$ & $\surd$\\
\specialrule{0em}{1pt}{1pt}
\bottomrule
\end{tabular}
}
\end{sc}
\end{small}
\end{center}
\vskip -0.1in
\end{table*}

\subsection{SpeedLimit Benchmark}

The task is inspired by~\citet{zhang2020first}, as illustrated in Figure~\ref{fig:env1}. In SpeedLimit task, the agent is a four-wheeled race-car whose observation is ego position, velocity and yaw. The selected action controls the Revolution Per Minute (RPM) and steering of wheels. The agent is rewarded for approaching  $x_{dest}=+\infty$ and the cost function is
\begin{equation}
    C(s,a) = \begin{cases}
            1,& \text{if vehicle's velocity} > 1.5m/s\\
            0,  & \text{otherwise}
        \end{cases}.
\end{equation}

The toy environment is simple yet representative since speed control is a classic problem in vehicle autonomy. Besides, the speed limit is easy to reach and thus undesirable algorithms may violate the safety constraint at almost every time step. That is, the toy environment enables us to see which algorithms can effectively degrade the dense cost return and  are best suited for safe AD tasks.

\subsection{MetaDrive Benchmark}

This task is inspired by~\citet{li2021metadrive}, as illustrated in Figure~\ref{fig:env2}. Metadrive is a compositional, lightweight and  realistic platform for vehicle autonomy. Most importantly, it provides pre-defined environments for safe policy learning in autopilots. Concretely, the observation is encoded by a vector containing ego-state, navigation information and surrounding information detected by the Lidar. We control the speed and steering of the car to hit virtual land markers for rewards, and the cost function is defined as
\begin{equation}
    C(s,a) = \begin{cases}
            1,& \text{if collides or out of the road}\\
            0,  & \text{otherwise}
        \end{cases}
\end{equation}

It worth mentioning that we set the traffic density twice than the original paper to construct a more challenging scenario.

\section{Efficient Safe RL Algorithms}
\label{sec5}
\subsection{Overall Implementation}
The current version of SafeRL-Kit contains some latest RL-based methods, including \emph{Safety Layer}~\citep{dalal2018safe}, \emph{Recovery RL}~\citep{thananjeyan2021recovery}, \emph{Off-policy Lagrangian}~\citep{ha2020learning}, \emph{Feasible Actor-Critic}~\citep{ma2021feasible} and newly proposed \emph{Exact Penalty Optimization}. We compare above methods along with some existing on-policy baselines~\citep{ray2019benchmarking} in Table~\ref{tab:comparision}.

Before diving into algorithmic details, we first explain the overall implementation of SafeRL-Kit and its benefits:

\begin{description}[leftmargin=0em]
\item[(1) ] The adopted algorithms address safe policy learning from different perspectives (Safety Layer for safety correction; Recovery RL for safety recovery; Lagrangian, FAC, and EPO for constrained optimization). Thus, users can combine AD-specific knowledge with the proper type of safe RL baselines in their studies.
\item[(2) ] All the algorithms are implemented under the off-policy Actor-Critic architecture. Thus, they enjoy better sample efficiency and can leverage human demonstration if needed.
\item[(3) ] All the algorithms are implemented with a unified training framework. By default, all networks are MLPs with (256,256) hidden layers activated via the ReLU function. The essential updates of backbone networks follow TD3~\citep{fujimoto2018addressing} without pre-training processes. Thus, we can conduct a fair comparison to see which of them are best suited for AD-oriented tasks.
\end{description}

\subsection{Safety Layer}


Safety Layer, added on top of the original policy network, conducts a quadratic-programming-based constrained optimization to find the "nearest" action to the feasible region. 

Specifically, Safety Layer utilizes a parametric linear model
\begin{equation}
    C(s_t,a_t) \approx g(s_t;\omega)^\top a_t + c_{t-1}
\end{equation}
to approximate the single-step cost function with supervised training and yields the following QP problem
\begin{equation}
\begin{aligned} \label{SafetyLayer}
a_t^* = &\ \  {\arg\min}_{a}\ \frac{1}{2}|| a - \mu_\theta(s)||^2\\
& \mathrm{s.t.} \quad g(s_t;\omega)^\top a_t + c_{t-1} \le \epsilon, 
\end{aligned}
\end{equation}
which projects the unsafe action back to the feasible region.

Since there is only one compositional cost signal in our problem, the closed-form solution of problem~\eqref{SafetyLayer} is
\begin{equation}
        a_t^* = \mu_\theta(s_t) - \bigg[\frac{g(s_t;\omega)^\top\mu_\theta(s) + c_{t-1} - \epsilon}{g(s_t;\omega)^\top g(s_t;\omega)} \bigg]^+ g(s_t;\omega)
\end{equation}

Thus, Safety Layer is the type of method that addresses state-wise, instantaneous constraints.

By the way, the $g_\omega$ is trained from offline data in~\citet{dalal2018safe}. SafeRL-Kit instead learns the linear model with the policy network synchronously, considering the side-effect of distribution shift. We employ a warm-up in the training process to avoid meaningless, inaccurate corrections.

\begin{table*}[t]
\caption{Hyper-parameters of different safety-aware algorithms in SafeRL-Kit.}
\label{tab:hyper}
\begin{center}
\begin{small}
\begin{sc}
\resizebox{\textwidth}{33mm}{
\begin{tabular}{|l|p{2.25cm}<{\centering}p{2.25cm}<{\centering}p{2.25cm}<{\centering}p{2.25cm}<{\centering}p{2.25cm}<{\centering}|}
\toprule
 Hyper-parameter& Safety Layer &  Recovery RL & Lagrangian & FAC & EPO \\
\midrule
\specialrule{0em}{1pt}{1pt}
Cost Limit & 0.02 & 0.1  & 0.1 & 0.1 & 0.1\\
\specialrule{0em}{1pt}{1pt}
Reward Discount & 0.99 & 0.99  & 0.99 & 0.99 & 0.99\\
\specialrule{0em}{1pt}{1pt}
Cost Discount & 0.99 & 0.99  & 0.99 & 0.99 & 0.99\\
\specialrule{0em}{1pt}{1pt}
Warm-up Ratio & 0.2 & 0.2  & N/A & N/A & N/A\\
\specialrule{0em}{1pt}{1pt}
Batch Size & 256 &  256   &  256  &  256  & 256 \\
\specialrule{0em}{1pt}{1pt}
Critic LR & 3E-4& 3E-4& 3E-4& 3E-4& 3E-4\\
\specialrule{0em}{1pt}{1pt}
Actor LR & 3E-4& 3E-4& 3E-4& 3E-4& 3E-4\\
\specialrule{0em}{1pt}{1pt}
Safe Critic LR & 3E-4& 3E-4& 3E-4& 3E-4& 3E-4\\
\specialrule{0em}{1pt}{1pt}
Safe Actor LR &  N/A& 3E-4&  N/A&  N/A&  N/A\\
\specialrule{0em}{1pt}{1pt}
Multiplier LR &  N/A & N/A & 1E-5& 1E-5&  N/A \\
\specialrule{0em}{1pt}{1pt}
Multiplier Init &  N/A & N/A & 0.0 & N/A &  N/A \\
\specialrule{0em}{1pt}{1pt}
Policy Delay & 2 &  2   &  2  &  2  & 2 \\
\specialrule{0em}{1pt}{1pt}
Multiplier Delay &  N/A &   N/A   &   N/A  &  12  &  N/A\\
\specialrule{0em}{1pt}{1pt}
Penalty Factor &  N/A &   N/A   &   N/A    &  N/A & 5\\
\bottomrule
\end{tabular}
}
\end{sc}
\end{small}
\end{center}
\vskip -0.1in
\end{table*}

\subsection{Recovery RL}

The critical insight behind Recovery RL is to introduce an additional policy that recovers potential unsafe states. Consequently, it trains two independent RL agents instead of solving a cumbersome constrained optimization problem. 

Specifically, Recovery RL learns a safe critic to estimate the future probability of constraint violation as
\begin{equation}\label{Q_risk}
    Q^\pi_\text{risk}(s_t,a_t) = c_t + (1-c_t) \gamma \mathbb{E}_{\pi} Q^\pi_\text{risk}(s_{t+1},a_{t+1}).
\end{equation}
This formulation is slightly different from the standard Bellman equation since it assumes the episode terminates when the agent receives a cost signal. We found in experiments that such an early stopping makes it intractable for agents to master desirable skills in AD. Thus, we remove the early-stopping condition but still preserve the original formulation of $Q^\pi_\text{risk}$ in~\eqref{Q_risk} since it limits the upper bound of the safe critic and eliminates the over-estimation in Q-learning. 

In the phase of policy execution, the recovery policy takes over the control when the predicted value of the safe critic exceeds the given threshold, as
\begin{equation}
a_t = 
        \begin{cases}
            \pi_\text{task}(s_t),& \text{if } Q^\pi_\text{risk}\big(s_t,\pi_\text{task}(s_t)\big)\leq \epsilon\\
            \pi_\text{risk}(s_t),  & \text{otherwise}
        \end{cases}
\end{equation}

The optimization objective of $\pi_\text{task}$ is to maximize the cumulative rewards, whereas the goal of $\pi_\text{risk}$ is to minimize $Q^\pi_\text{risk}$, namely to degrade the potential risk of the agent.

It is important to store $a_\text{task}$ and $a_\text{risk}$ simultaneously in the replay buffer, and utilize them to train $\pi_\text{task}$ and $\pi_\text{risk}$ respectively in Recovery RL. This technique ensures that $\pi_\text{task}$ can learn from the new MDP, instead of proposing same unsafe actions continuously.

Similar to Safety Layer, Recovery RL in SafeRL-Kit also has a warm-up stage where $Q^\pi_\text{risk}$ is trained but is not utilized.

\subsection{Off-policy Lagrangian}
\label{sec5.3}
Lagrangian Relaxation is commonly-used to address constrained optimization problem. Safe RL as well can be formulated as a constrained sequential optimization problem
\begin{equation}
\begin{aligned} \label{LAG}
    \mathop{\max}_{\pi} &\mathop{\mathbb{E}}_{s\sim\mu} V_0^\pi(s)\\
    \mathrm{s.t.} \ \  &\mathop{\mathbb{E}}_{s\sim\mu} U^\pi_{0}(s) \leq \epsilon,
\end{aligned}
\end{equation}
where $V^\pi_0(s) = \mathop{\mathbb{E}}_{\tau\sim \pi}\big [ \sum^\infty_{t=0}\gamma^t r_t\big |s_0 = s ] $ and $U^\pi_0(s) = \mathop{\mathbb{E}}_{\tau\sim \pi}\big [ \sum^\infty_{t=0}\gamma^t c_t\big |s_0 = s ] $.

Strong duality holds for primal problem~\eqref{LAG}~\citep{paternain2022safe}, thus it can be tackled via the dual problem
\begin{equation}\label{LAGDual}
    \mathop{\max}_{\lambda \geq 0} \mathop{\min}_{\pi} \mathop{\mathbb{E}}_{s\sim\mu} -V_0^\pi(s) + \lambda \big(U^\pi_{0}(s) - \epsilon \big).
\end{equation}

The off-policy objective of problem~\eqref{LAGDual} in the  parametric space~\citep{ha2020learning} can be formulated as
\begin{equation}
    \mathop{\max}_{\lambda \geq 0} \mathop{\min}_{\theta} \mathbb{E}_{\mathcal{D}} -Q^\pi(s,\pi_\theta(s)) + \lambda \big(Q^\pi_{c}(s,\pi_\theta(s)) - \epsilon \big).
\end{equation}

Stochastic primal-dual optimization~\cite{luenberger1984linear} is applied here to update primal and dual variables alternatively, which follows as
\begin{equation}\label{pd}
        \begin{cases}
            \theta \leftarrow \theta - \eta_\theta \nabla_\theta \mathbb{E}_{\mathcal{D}}\big( -Q^\pi(s,\pi_\theta(s)) + \lambda Q^\pi_{c}(s,\pi_\theta(s))  \big)\\
            \lambda \leftarrow \big[ \lambda + \eta_\lambda \mathbb{E}_{\mathcal{D}}\big(Q^\pi_{c}(s,\pi_\theta(s)) - \epsilon \big) \big]^+
        \end{cases}
\end{equation}

Notably, the timescale of primal variable updates is required to be faster than the timescale of Lagrange multipliers. Thus, we set $\eta_\theta \gg \eta_\lambda$ in SafeRL-Kit.

\subsection{Feasible Actor-Critic}
\label{sec5.4}
The constraint of Off-policy Lagrangian in Section~\ref{sec5.3} is based on the expectation of whole trajectories, which inevitably allows some unsafe roll-outs. \citet{ma2021feasible} introduce a new concept, namely state-wise constraints for cumulative cost-return which follows as
\begin{equation}
\begin{aligned} \label{FAC}
    \mathop{\max}_{\pi} &\mathop{\mathbb{E}}_{s\sim\mu} V_0^\pi(s)\\
    \mathrm{s.t.} \ \  & U^\pi_{0}(s) \leq \epsilon, \forall s \in \mathcal{I_F}.
\end{aligned}
\end{equation}
Here $s \in \mathcal{I_F}$ stands for all "feasible" initial states.  Also, their theoretical results show that problem~\eqref{FAC} is a stricter version than problem~\eqref{LAG}, which provides strong safety guarantees for state-wise safe control. 

The dual problem of~\eqref{FAC} is derived by rescaling the state-wise constraints and follows as
\begin{equation}\label{FACDual}
    \mathop{\max}_{\lambda \geq 0} \mathop{\min}_{\pi} \mathop{\mathbb{E}}_{s\sim\mu} -V_0^\pi(s) + \lambda(s) \big(U^\pi_{0}(s) - \epsilon \big).
\end{equation}

The distinctiveness of problem~\eqref{FACDual} is there are infinitely many Lagrangian multipliers that are state-dependent. In SafeRL-Kit, we employ an neural network $\lambda(s;\xi)$ activated by \emph{Softplus} function to map the given state $s$ to its corresponding Lagrangian multiplier $\lambda(s)$.

The primal-dual ascents of policy network is similar to~\eqref{pd}; the updates of multiplier network is given by
\begin{equation}
    \xi \leftarrow \xi + \eta_\xi \nabla_\xi  \mathbb{E}_{\mathcal{D}} \lambda(s;\xi) \big(Q^\pi_{c}(s,\pi_\theta(s)) - \epsilon \big).
\end{equation}

Besides, SafeRL-Kit also sets a different interval schedule $m_\pi$ (for $\pi_\theta$
delay steps) and $m_\lambda$  (for $\lambda_\xi$ delay steps) to stabilize the training process~\citep{ma2021feasible}.

\subsection{Exact Penalty Optimization}

\begin{algorithm}[tb]
\caption{State-wise Exact Penalty Optimization}
\begin{algorithmic}[1]
\label{algo1}
\REQUIRE deterministic policy network $\pi(s;\theta)$; critic networks $\hat{Q}(s,a;\phi)$ and $\hat{Q}_c(s,a;\varphi)$
\FOR{t \textbf{in} $1,2,...$}
\STATE $a_t = \pi(s_t;\theta) + \epsilon,\ \ \epsilon\sim\mathcal{N}(0,\sigma)$.
\STATE Apply $a_t$ to the environment.
\STATE Store the transition $(s_t,a_t,s_{t+1},r_t,c_t,d_t)$ in $\mathcal{B}$.
\STATE Sample a mini-batch of $N$ transitions from $\mathcal{B}$.
\STATE $\varphi \leftarrow {\arg\min}_\varphi \mathop{\mathbb{E}}_{\mathcal{B}} \big[\hat Q_c(s,a;\varphi)-\big(c+\gamma_c(1-d) \hat Q_C(s',\pi(s';\theta);\varphi) \big)\big]^2$.
\STATE $\phi \leftarrow {\arg\min}_\phi \mathop{\mathbb{E}}_{\mathcal{B}} \big[\hat Q(s,a;\phi)-\big(r+\gamma(1-d) \hat Q(s',\pi(s';\theta);\phi)\big)\big]^2$.
\STATE $\theta \leftarrow {\arg\min}_\theta \mathop{\mathbb{E}}_{\mathcal{B}}\big[ -\hat Q (s,\pi(s;\theta);\phi) + \kappa\cdot\max\{0, \hat Q_c(s,\pi(s;\theta);\varphi) - \delta\} \big]$.
\ENDFOR
\end{algorithmic}
\end{algorithm}

In this paper, we propose a simple-yet-effective approach motivated by the exact penalty method.

\begin{theorem}\label{th:exact}
Considering the following two problems
    \begin{align}
    &\min f(x)\ \ \mathrm{s.t.} \ g_i(x)\leq0, i=1,2,...\tag{P} \label{p}\\
    & \min f(x) + \kappa \cdot \sum_i \max \{0,g_i(x)\}\tag{Q} \label{q}
    \end{align}
Suppose $\lambda^*$ is the optimal Lagrange multiplier vector of problem (\ref{p}). If the penalty factor $\kappa \geq ||\lambda^*||_\infty$, problem (\ref{p}) and problem (\ref{q}) share the same optimal solution set.
\end{theorem}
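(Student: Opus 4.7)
The plan is to reduce the equivalence to a chain of sharp inequalities built from the Lagrangian of (\ref{p}). First I would invoke strong duality (for the statement to make sense, an optimal multiplier vector $\lambda^*$ must exist; in a convex setting this follows from a Slater-type condition, and more generally from the KKT conditions at $x^*$). This gives me a primal-dual optimal pair $(x^*, \lambda^*)$ satisfying $g_i(x^*)\le 0$, $\lambda^*_i\ge 0$, complementary slackness $\lambda^*_i g_i(x^*)=0$, and the fact that $x^*$ minimizes the Lagrangian $L(x,\lambda^*)=f(x)+\sum_i \lambda^*_i g_i(x)$.

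For the forward inclusion (every optimum of (\ref{p}) is an optimum of (\ref{q})), I would write, for an arbitrary $x$,
\begin{equation*}
f(x) + \kappa\sum_i \max\{0,g_i(x)\} \;\ge\; f(x) + \sum_i \lambda^*_i \max\{0,g_i(x)\} \;\ge\; f(x) + \sum_i \lambda^*_i g_i(x) \;\ge\; L(x^*,\lambda^*) \;=\; f(x^*),
\end{equation*}
using $\kappa\ge\lambda^*_i\ge 0$ in the first step, $\max\{0,g_i(x)\}\ge g_i(x)$ in the second, Lagrangian stationarity of $x^*$ in the third, and complementary slackness in the last. Since $g_i(x^*)\le 0$ makes the penalty term vanish at $x^*$, the bound $f(x^*)$ is attained, so $x^*$ also solves (\ref{q}) and the two problems share the same optimal value.

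For the reverse inclusion, I would take any (\ref{q})-optimum $x^\dagger$ and force every inequality in the chain above to be tight at $x^\dagger$. The tightness of the middle inequality gives $\sum_i \lambda^*_i\big(\max\{0,g_i(x^\dagger)\}-g_i(x^\dagger)\big)=0$, so $g_i(x^\dagger)\ge 0$ whenever $\lambda^*_i>0$; the tightness of the first gives $\sum_i(\kappa-\lambda^*_i)\max\{0,g_i(x^\dagger)\}=0$, so $\max\{0,g_i(x^\dagger)\}=0$ whenever $\kappa>\lambda^*_i$. Combining these two conclusions with complementary slackness yields $g_i(x^\dagger)\le 0$ for every $i$, i.e.\ primal feasibility, and the collapsed chain then reads $f(x^\dagger)=f(x^*)$, which identifies $x^\dagger$ as a solution of (\ref{p}).

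The subtle obstacle I expect is the borderline case $\kappa=\lambda^*_i$ for some active constraint: in that situation the first tightness identity is vacuous for that index, and in principle a (\ref{q})-minimizer could be infeasible for (\ref{p}) along those components (a one-dimensional example with $f(x)=x$, $g(x)=-x$ shows this is a genuine issue at $\kappa=1$). The clean way to close this gap is to strengthen the hypothesis to $\kappa>\|\lambda^*\|_\infty$, or equivalently to note that the stated inclusion of optimal sets still holds for the interior of the penalty range; I would flag this explicitly and verify that the EPO implementation in Algorithm~\ref{algo1} operates in the strict regime, so the equivalence is genuinely usable in practice.
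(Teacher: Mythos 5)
Your argument is correct and is essentially the classical exact-penalty proof (\`a la Bertsekas): use the saddle-point property of $(x^*,\lambda^*)$ to sandwich the penalized objective from below by $f(x^*)$, then run the inequality chain backwards at a minimizer of (\ref{q}) to recover feasibility and optimality for (\ref{p}). The paper itself offers no proof to compare against --- it only cites \citet{Zhang2022PenalizedPP} --- so yours is the more informative document. Two remarks. First, your boundary observation is not merely a technical quibble: with $f(x)=x$, $g(x)=-x$, $\lambda^*=1$, $\kappa=1$, the penalized objective equals $0$ on $(-\infty,0]$ and $x$ on $[0,\infty)$, so the optimal set of (\ref{q}) is $(-\infty,0]$ while that of (\ref{p}) is $\{0\}$; hence the theorem as stated, with $\kappa\ge\|\lambda^*\|_\infty$, is actually false for the reverse inclusion, and only ``every optimum of (\ref{p}) is an optimum of (\ref{q}) and the optimal values agree'' survives at the boundary. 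The clean statement needs $\kappa>\|\lambda^*\|_\infty$, which is consistent with the paper's own practical advice ($\kappa>5$, grid search upward). Second, be slightly careful with ``more generally from the KKT conditions at $x^*$'': in a nonconvex problem, KKT stationarity does not imply that $x^*$ globally minimizes $L(\cdot,\lambda^*)$, and your first chain genuinely needs the global saddle-point (geometric multiplier) property, i.e.\ zero duality gap. Since the theorem presupposes an ``optimal Lagrange multiplier vector,'' reading that as a dual optimum with strong duality is the right interpretation, but it is worth stating explicitly as a hypothesis rather than deriving it from KKT.
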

\begin{proof}
See our recent work~\citep{Zhang2022PenalizedPP}.
\end{proof}

The above theorem enables us to construct an equivalent function whose unconstrained minimizing points also solve the previous constrained problem. Meanwhile, the unconstrained problem can tackle multiple constraints with exactly one consistent penalty factor.

Thus, we simplify Lagrangian-based methods (i.e., Off-policy Lagrangian and FAC) with this technique, considering that the single-constrained optimization problem~\eqref{LAG} and the multi-constrained optimization problem~\eqref{FAC} are suited for exact penalty method in Theorem~\ref{th:exact}. In this way, we can employ a single minimization on primal variables with fixed penalty terms instead of cumbersome min-max optimization over both primal and dual variables. 

Below we merely summarize the state-wise Exact Penalty Optimization (EPO) in Algorithm~\ref{algo1} as an alternative to FAC, since FAC provides stricter safety guarantees but suffers from the oscillation and instability of the multiplier network. The off-policy surrogate objective of state-wise EPO follows as
\begin{equation}
    \ell(\theta) = \mathbb{E}_{\mathcal{D}} -Q^\pi(s,\pi_\theta(s)) + \kappa \big[Q^\pi_{c}(s,\pi_\theta(s)) - \epsilon \big]^+,
\end{equation}
where $\kappa$ is a fixed, sufficiently large hyper-parameter.

\section{Empirical Analysis}
We benchmark RL-based algorithms on SpeedLimit task~\citep{zhang2020first} and MetaDrive platform~\citep{li2021metadrive}. Below, we give a comparative evaluation according to the empirical results.

\begin{table*}
\centering
\caption{Mean performance with normal 95\% confidence for safety-aware algorithms on benchmarks.}
\vskip 0.1in
\begin{small}
\begin{sc}
\resizebox{\textwidth}{16mm}{
\begin{tabular}{llccccc}
\hline
\specialrule{0em}{1pt}{1pt}
\multicolumn{2}{c}{Environments} & Safety Layer  & Recovery RL &  Lagrangian & FAC & EPO \\
\specialrule{0em}{1pt}{1pt}
\hline
\specialrule{0em}{1pt}{1pt}
\multirow{3}*{SpeedLimit} &  Ep-Reward & $651.59\pm 10.70$ & $623.67\pm 99.58$& $565.50\pm 69.29$ & $631.55\pm 34.92$& $\bm{684.86\pm 3.19}$\\
~ & Ep-Cost & $76.30\pm9.07$ & $187.14\pm96.50$ &$7.28\pm3.11$ & $7.83\pm5.23$& \bm{$5.44\pm0.53$}\\
~ &  CostRate & $0.33\pm0.01$ & $0.43\pm0.06$ &$0.06\pm0.01$ & $0.07\pm0.01$& \bm{$0.02\pm0.01$}\\
\specialrule{0em}{1pt}{1pt}
\hline
\specialrule{0em}{1pt}{1pt}
\multirow{3}*{MetaDrive} &  SuccessRate & $0.73\pm 0.05$&  \bm{$0.78\pm 0.06$}& $0.74\pm0.05$ & $0.68\pm 0.04$ & $0.73\pm 0.05$\\
~ &  Ep-Cost & $12.91\pm1.10$ & $14.18\pm1.92$ &$9.23\pm4.88$ & \bm{$3.29\pm0.50$}& $4.29\pm0.71$\\
~ &  CostRate & $0.04\pm0.001$ & $0.05\pm0.001$ &$0.02\pm0.01$ & \bm{$0.01\pm0.01$}& $0.01\pm0.01$\\
\specialrule{0em}{1pt}{1pt}
\hline
\end{tabular}
}
\end{sc}
\end{small}
\label{tab:perf}
\end{table*}

\paragraph{Unconstrained Reference.}
We utilize TD3~\citep{fujimoto2018addressing} as the unconstrained reference for upper bounds of reward performance and constraint violations. For the SpeedLimit task (500 max\_episode\_horizon), TD3 exceeds the velocity threshold at almost every step with a near 100\% cost rate. For the MetaDrive environment (1000 max\_episode\_horizon), the agent receives sparse cost signals when it collides with obstacles or is out of the road. Besides, the cost signals are encoded into the reward function; otherwise, it would be too hard to learn desirable behaviors~\cite{li2021metadrive}. Consequently, TD3 with reward-shaping (TD3-RS) would not have that high cumulative costs as it does in the toy environment.

\paragraph{Overall Performance.}
The mean performances are summarized in Table~\ref{tab:hyper} and the learning curves over five seeds are shown in Figure~\ref{fig:learing_curves_toy} and \ref{fig:learing_curves}. We conclude that Safety Layer and Recovery RL are less effective in degrading cost return. They still have around 10\% safety violations in SpeedLimit, and the safety improvement in MetaDrive is also limited. As for Safety Layer, the main reasons are that the linear approximation to the cost function brings about non-negligible errors, and the single-step correction is myopic for future risks. As for Recovery RL, the estimation error of $Q_\text{risk}$ is probably the biggest problem affecting the recovery effects. By contrast, Off-policy Lagrangian and FAC have significantly lower cumulative costs. However, the Lagrangian-based methods have the inherent problems from primal-dual ascents. For one thing, the Lagrangian multiplier tuning causes oscillations of learning curves. For another thing, those algorithms are susceptible to Lagrangian multipliers' initialization and learning rate. We conclude that constrained optimization still outperforms safety correction and recovery if the hyper-parameters are appropriately settled. At last, we find that the newly proposed EPO is surprisingly effective for learning safe AD. In SpeedLimit, it converges to a high plateau quickly while adhering to an almost zero cost return. In MetaDrive, it is still competitive with SOTA baselines. We regard the underlying reason as that EPO is an equivalent form to FAC but reduces state-dependent Lagrangian multipliers to one fixed hyper-parameter. The consistent loss function stabilizes the training process compared with primal-dual optimization.

\paragraph{Sensitivity Analysis.} In this paper, we study the sensitivity to hyper-parameters of Lagrangian-based methods and EPO in Figure~\ref{fig:learing_curves_sa1} and Figure~\ref{fig:learing_curves_sa2} respectively. We found that Lagrangian-based methods are susceptible to the learning rate of the Lagrangian multiplier(s) in stochastic primal-dual optimization. First, the oscillating $\lambda$ causes non-negligible deviations in the learning curves. Besides, the increasing $\eta_\lambda$ may degrade the performance dramatically. The phenomenon is especially pronounced in FAC, which has a multiplier network to predict the state-dependent $\lambda(s;\xi)$. Thus, we suggest $\eta_\lambda \ll \eta_\theta$ in practice. As for EPO, we find if the penalty factor $\kappa$ is too small, the cost return may fail to converge. Nevertheless, if $\kappa$ is sufficiently large, the learning curves are robust and almost identical. Thus, we suggest $\kappa > 5$ in experiments and a grid search for better performance.

\paragraph{Sample Complexity.}
Considering the difficulty of the above two tasks, we run $5\times 10^5$ and $1 \times 10^6$ interactive steps respectively to obtain admissible results. Notably, previous on-policy codebases require significantly more samples for convergence; for example, \citet{ray2019benchmarking} run $1\times 10^7$ interactive steps even for toy environments. Thus, SafeRL-Kit with off-policy implementations is 
much more sample-efficient compared to theirs, emphasizing the applicability of SafeRL-Kit to data-expensive AD-oriented tasks. 
\section{Further Discussion}
The released SafeRL-kit contains several SOTA off-policy safe RL methods that are suited for safety-critical autonomous driving. We conduct the comparative evaluation of those baselines over one representative toy environment and one simulated AD platform, respectively. The proposed Exact Penalty Optimization in this paper is easy-to-implement and surprisingly effective on AD-oriented tasks. We think future work on SafeRL-kit from two aspects:
\begin{itemize}
    \item The off-policy implementation of SafeRL-Kit can naturally leverage offline data, including past logs and human demonstrations, which are commonly used and highly effective for AD-oriented tasks.
    \item We only benchmark safe RL methods with vector input (ego-state, navigation information, Lidar signals, etc.) in this paper. Nevertheless, vision-based AD is still less studied in the current version of SafeRL-Kit.
\end{itemize}

\begin{figure*}
      \centering
    \hspace{0.5cm}\includegraphics[width=0.85\linewidth,trim=0 0 0 0,clip]{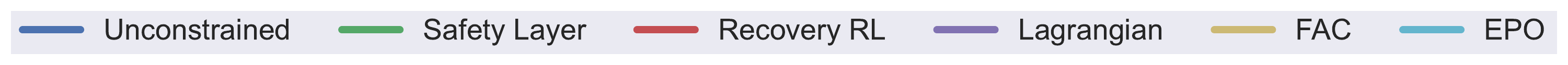}\vspace{0.1cm}\\
    \subcaptionbox{Eval Episode Reward}
        {\includegraphics[width=0.3\linewidth,trim=20 20 0 0,clip]{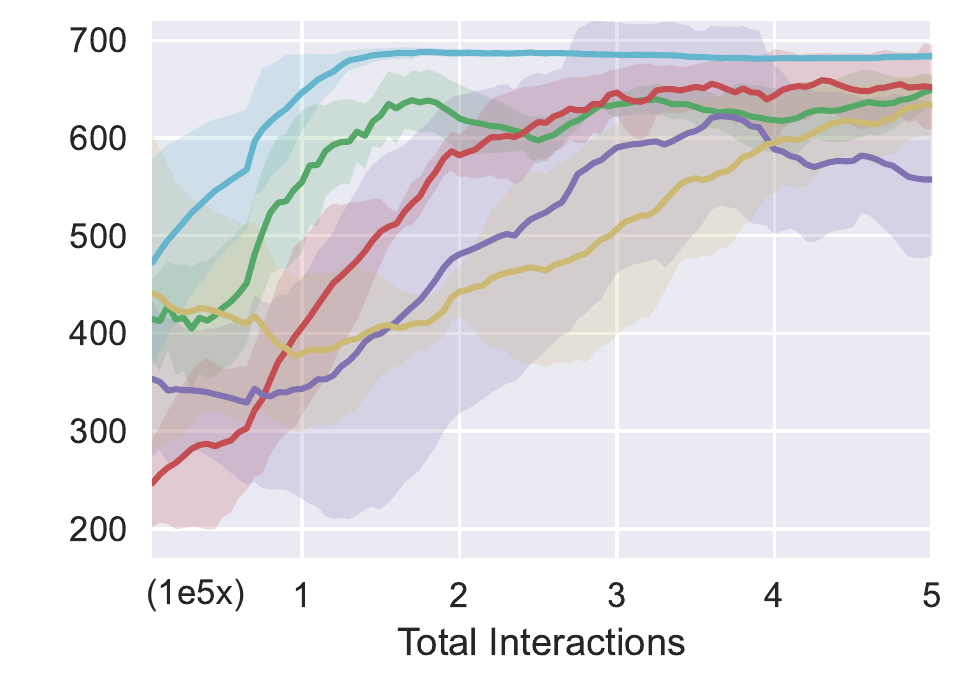}}
       \subcaptionbox{Eval Episode Cost}
        {\includegraphics[width=0.3\linewidth,trim=20 20 0 0,clip]{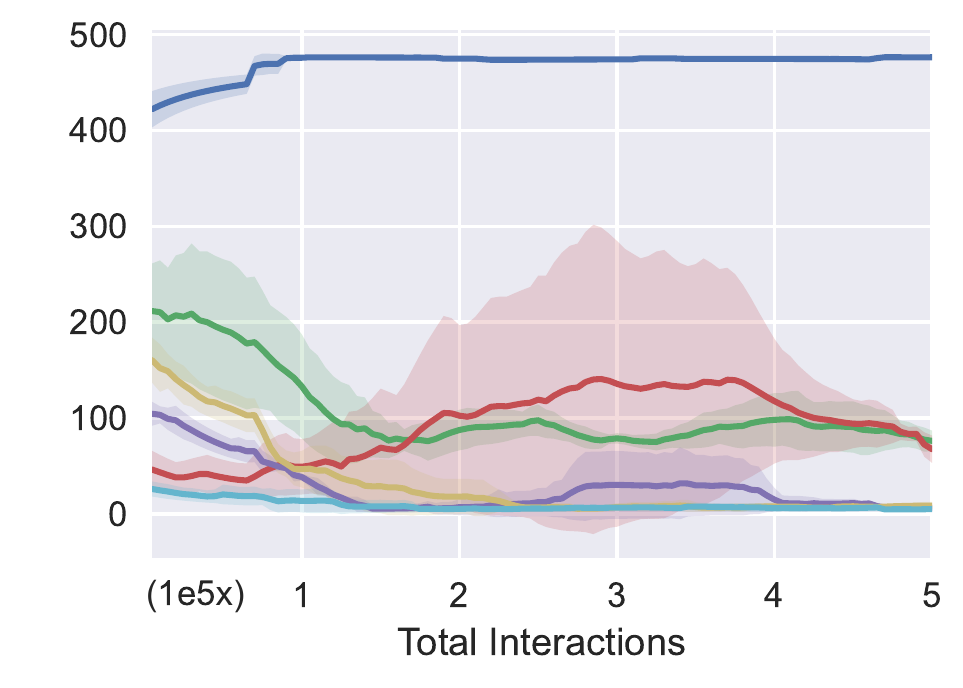}}
      \subcaptionbox{Training Cost rate}
        {\includegraphics[width=0.3\linewidth,trim=20 20 0 0,clip]{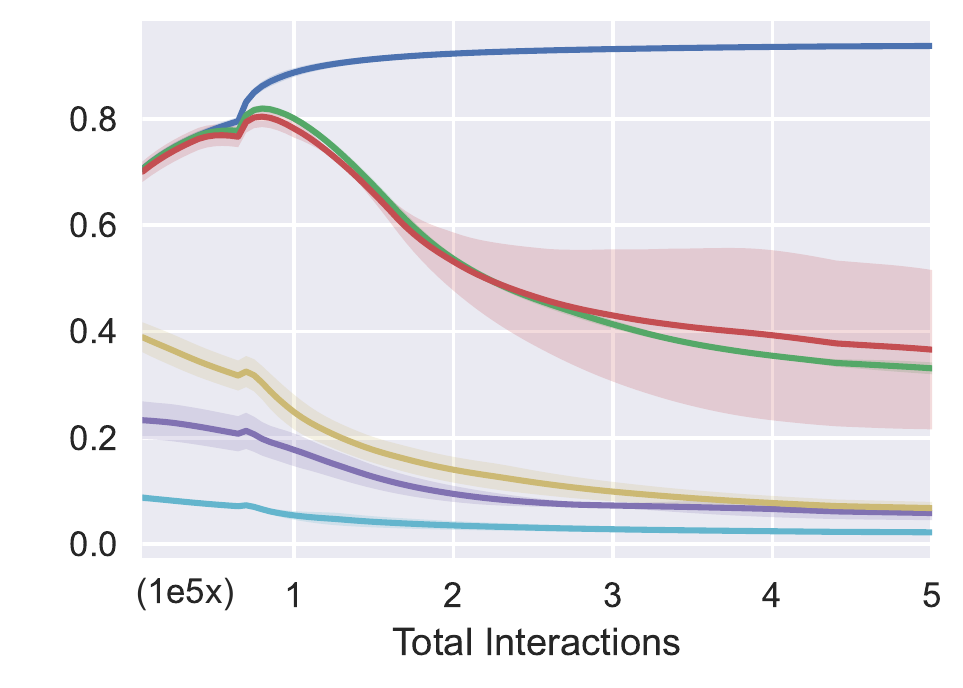}}
        \vspace{-0.15cm}
      \caption{Learning curves on the SpeedLimit benchmark. The x-axis is the number of interactions with the simulator (500,000 total).}
      \label{fig:learing_curves_toy}
      
\end{figure*}

\begin{figure*}
      \centering
    \hspace{0.5cm}\includegraphics[width=0.85\linewidth,trim=0 0 0 0,clip]{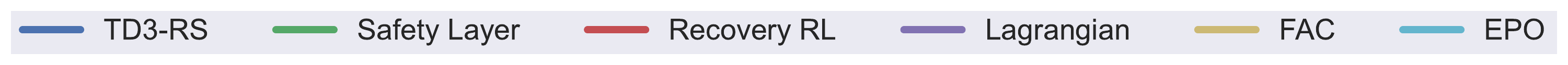}\vspace{0.1cm}\\
    \subcaptionbox{Eval Success Rate}
        {\includegraphics[width=0.3\linewidth,trim=20 20 0 0,clip]{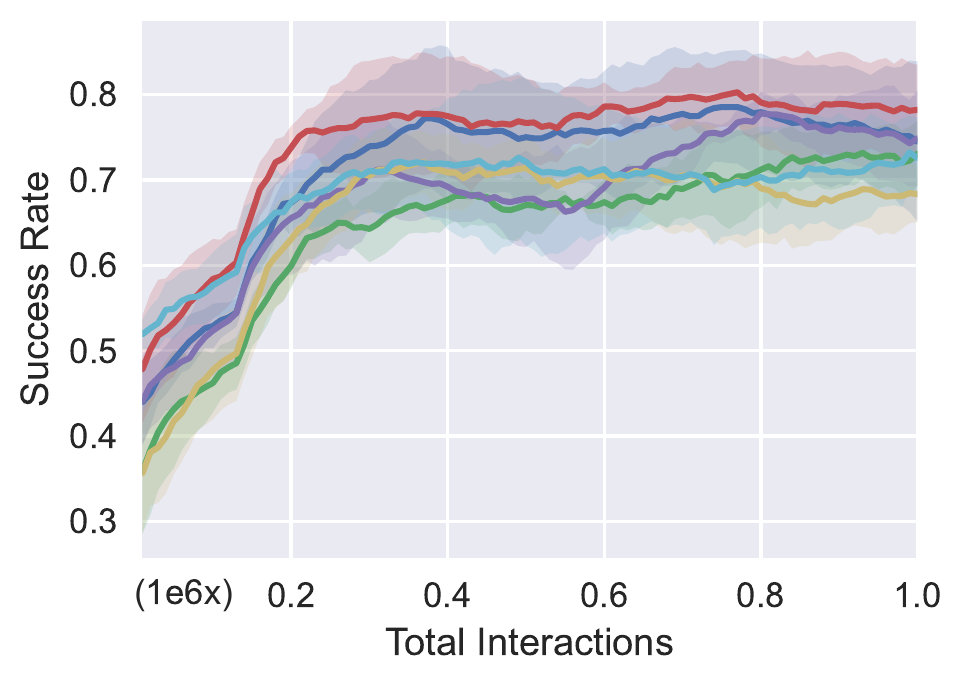}}
       \subcaptionbox{Eval Episode Cost}
        {\includegraphics[width=0.3\linewidth,trim=20 20 0 0,clip]{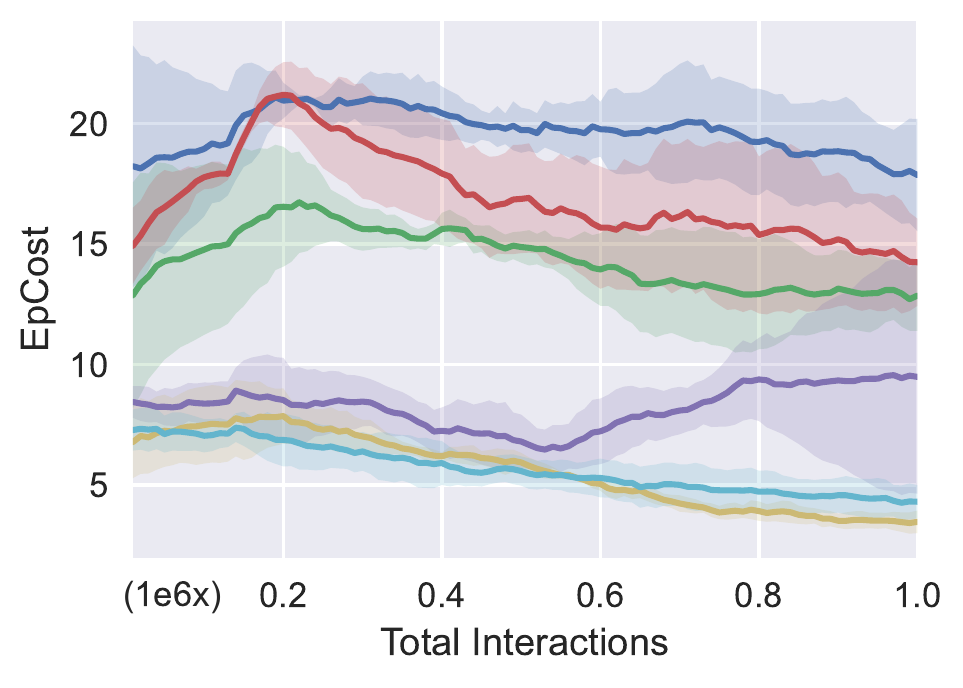}}
      \subcaptionbox{Training Cost rate}
        {\includegraphics[width=0.3\linewidth,trim=20 20 0 0,clip]{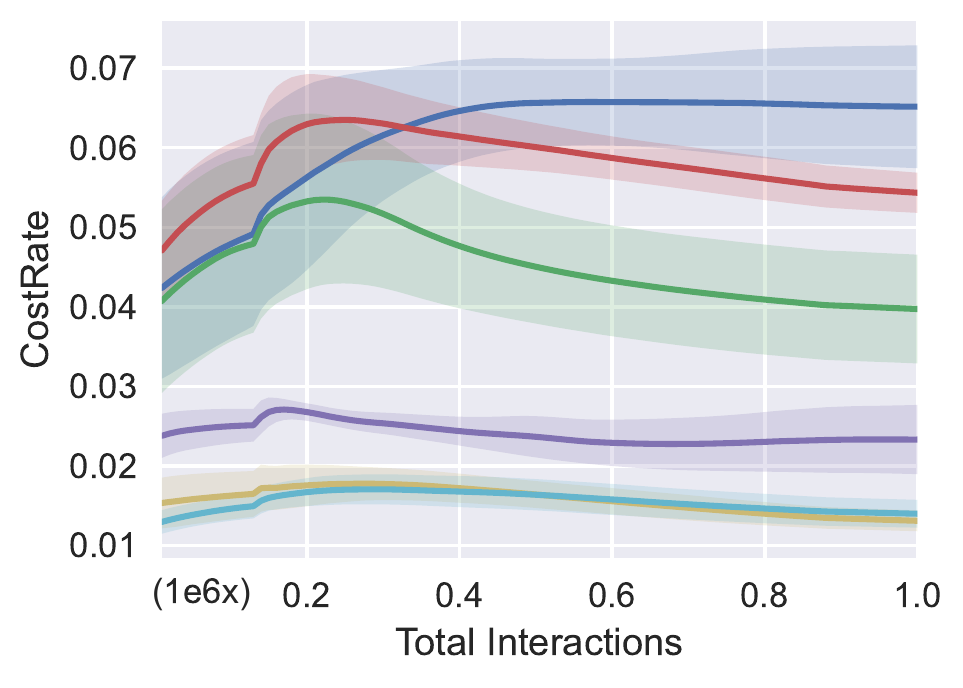}}
      \vspace{-0.15cm}
      \caption{Learning curves on the MetaDrive Benchmark. The x-axis is the number of interactions with the simulator (1,000,000 total).}
      \label{fig:learing_curves}
      \vspace{-0.25cm}
\end{figure*}

\begin{figure*}
      \centering
    \hspace{0.5cm}\includegraphics[width=0.85\linewidth,trim=0 0 0 0,clip]{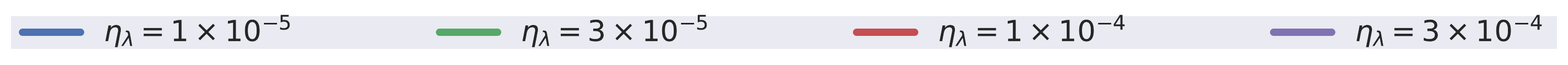}\vspace{0.1cm}\\
    \subcaptionbox{Reward-Lag-SpeedLimit}
        {\includegraphics[width=0.23\linewidth,trim=20 20 0 0,clip]{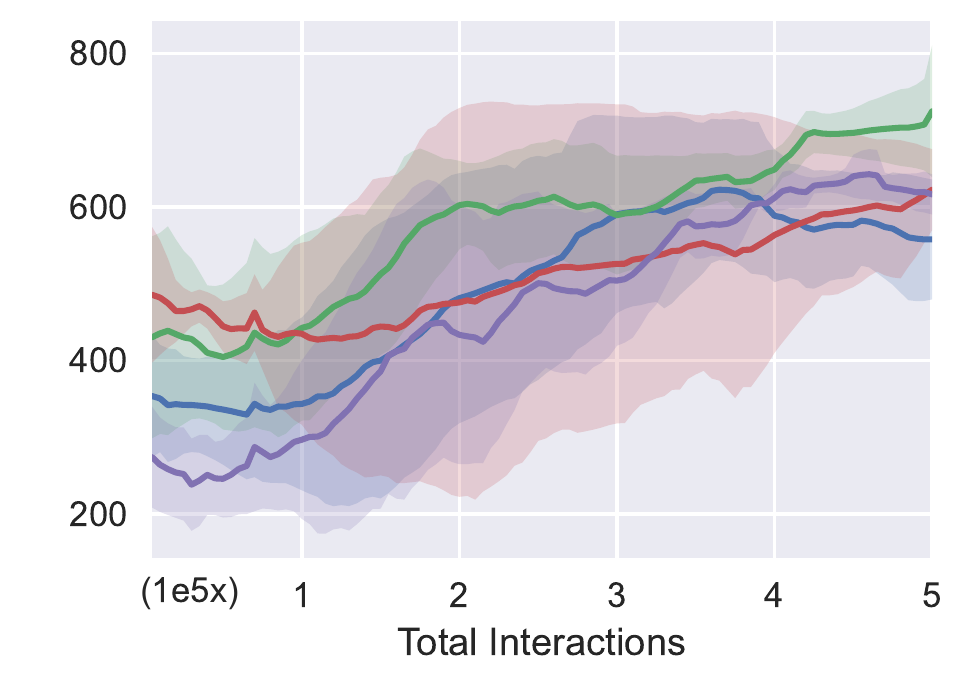}}
       \subcaptionbox{Cost-Lag-SpeedLimit}
        {\includegraphics[width=0.23\linewidth,trim=20 20 0 0,clip]{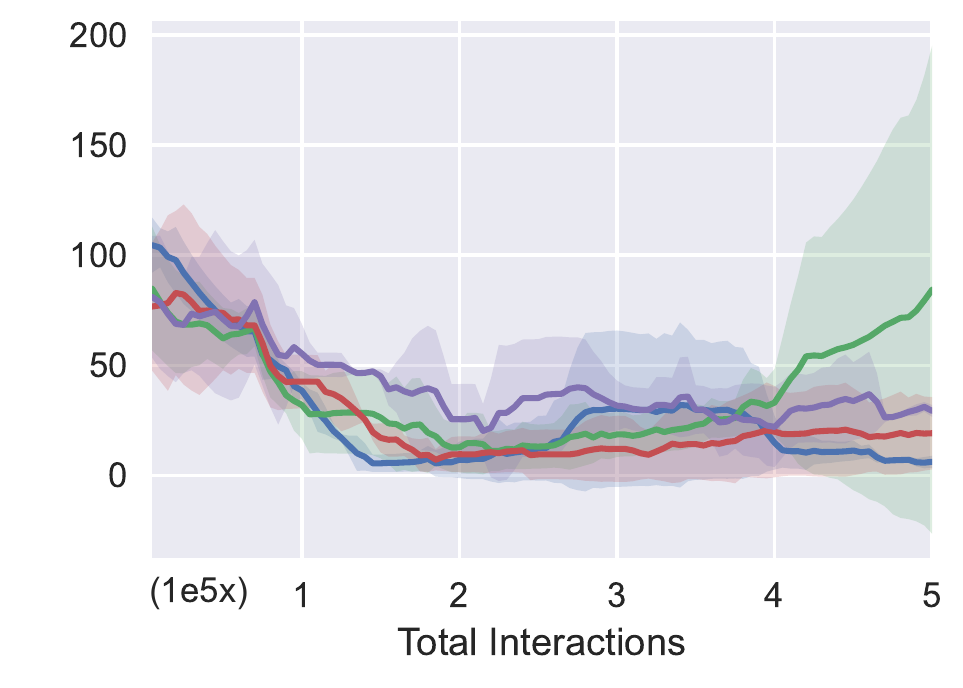}}
      \subcaptionbox{Reward-FAC-MetaDrive}
        {\includegraphics[width=0.23\linewidth,trim=20 20 0 0,clip]{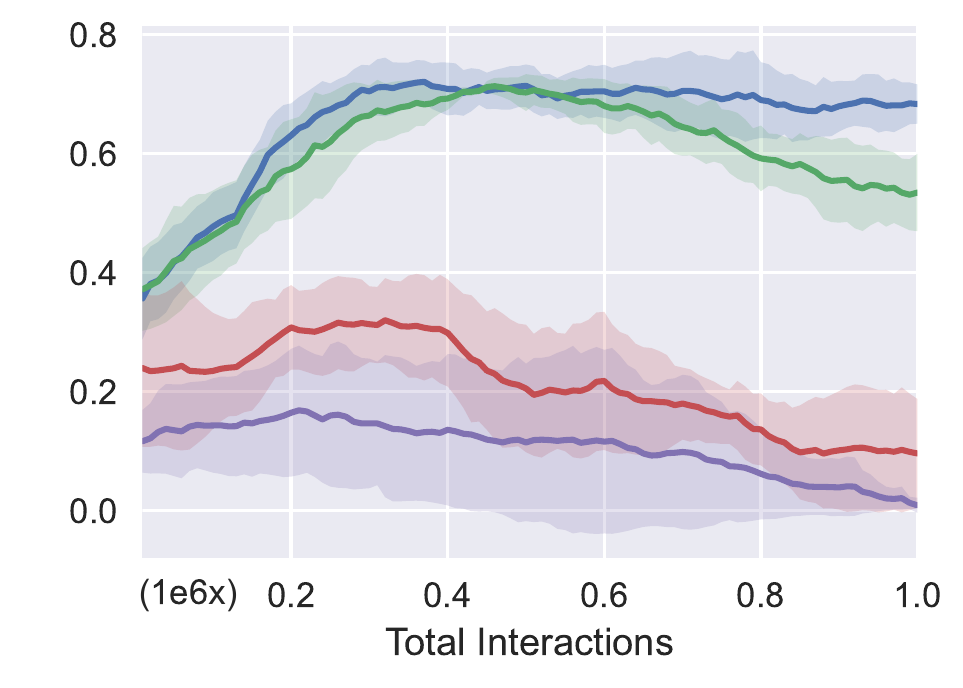}}
    \subcaptionbox{Cost-FAC-MetaDrive}
        {\includegraphics[width=0.23\linewidth,trim=20 20 0 0,clip]{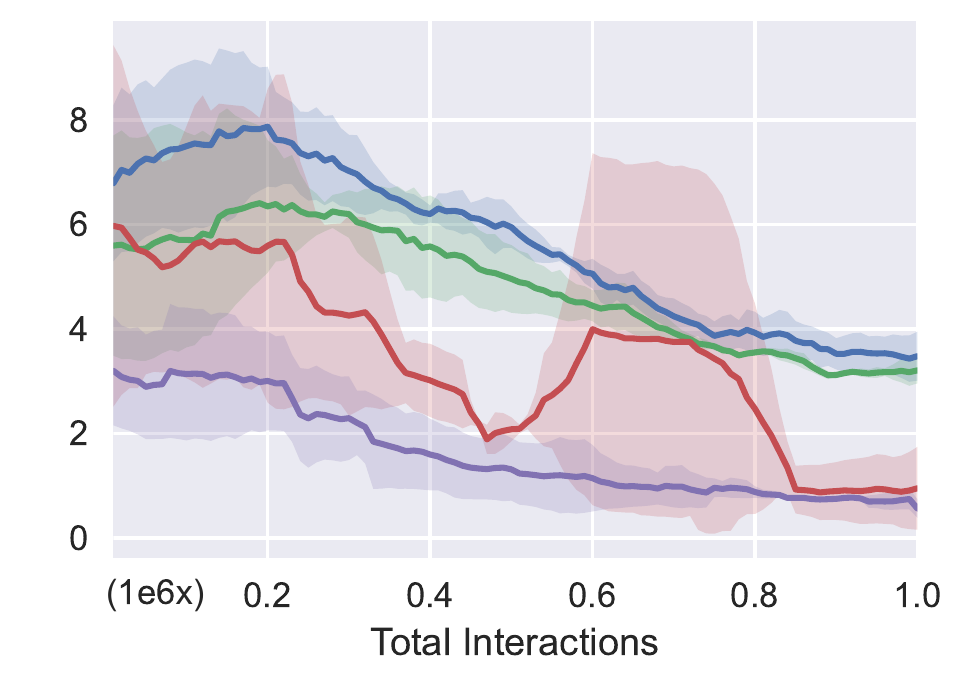}}
        \vspace{-0.15cm}
      \caption{Sensitivity studies of Lagrangian-based methods. The first two figures are reward and cost plots of Off-policy Lagrangian on SpeedLimit task with different $\lambda$ learning rates. The last two figures are success rate and cost plots of Feasible Actor-Critic on MetaDrive benchmark with different $\lambda(s;\xi)$ learning rates.}
      \label{fig:learing_curves_sa1}
      
\end{figure*}

\begin{figure*}
      \centering
    \hspace{0.5cm}\includegraphics[width=0.85\linewidth,trim=0 0 0 0,clip]{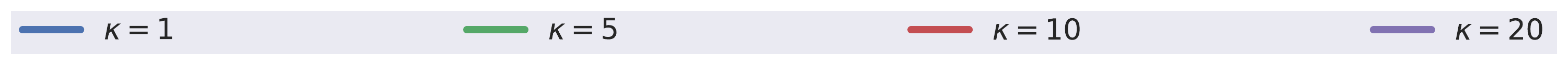}\vspace{0.1cm}\\
 \subcaptionbox{Eval Episode Reward}
        {\includegraphics[width=0.23\linewidth,trim=20 20 0 0,clip]{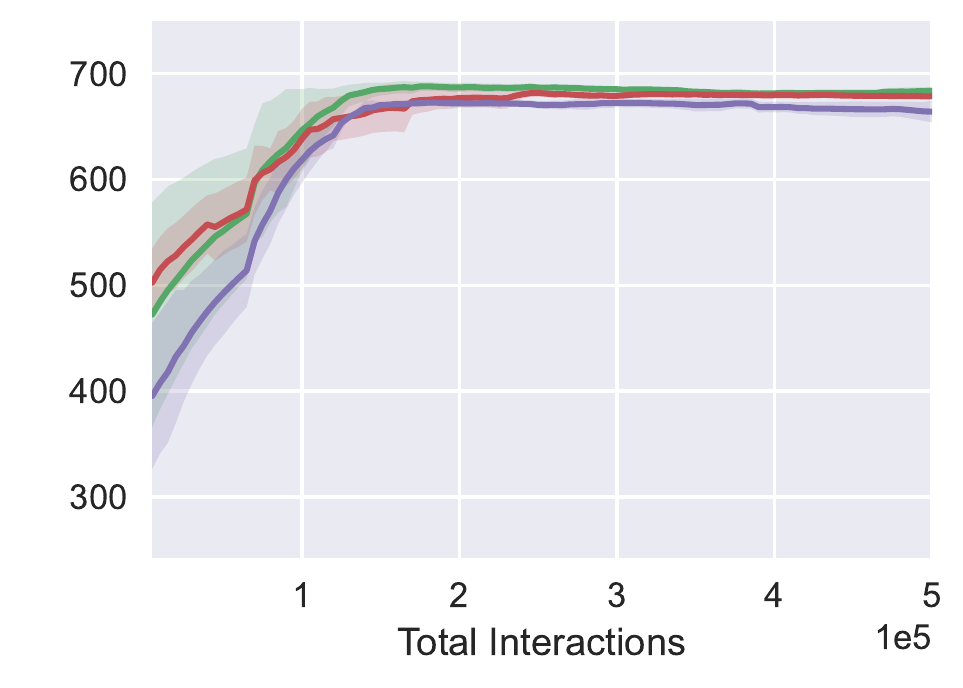}}
       \subcaptionbox{Eval Episode Cost}
        {\includegraphics[width=0.23\linewidth,trim=20 20 0 0,clip]{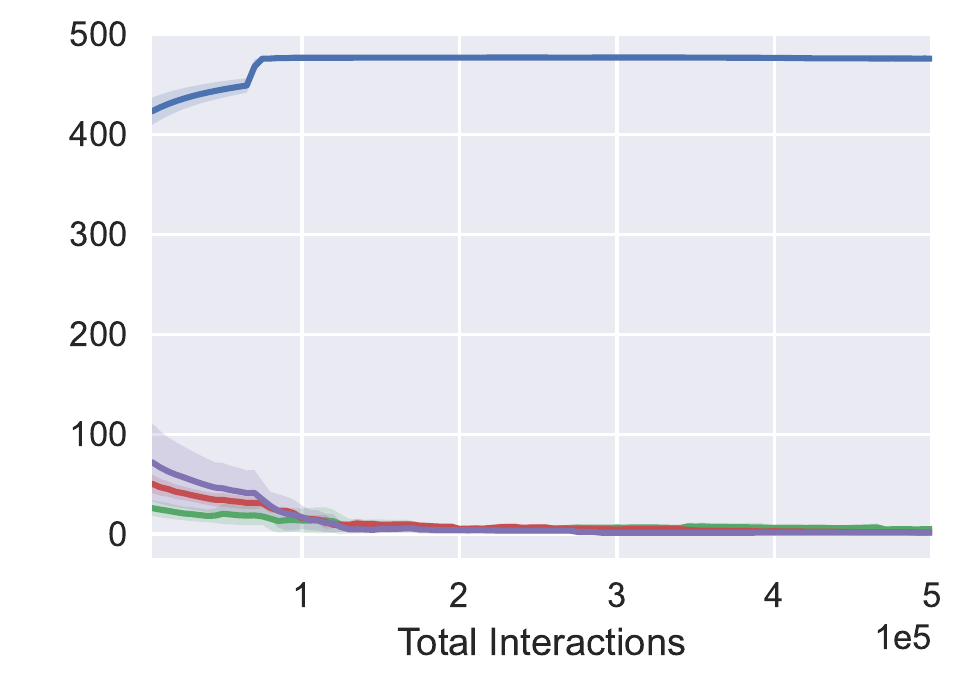}}
      \subcaptionbox{Training Cost rate}
        {\includegraphics[width=0.23\linewidth,trim=20 20 0 0,clip]{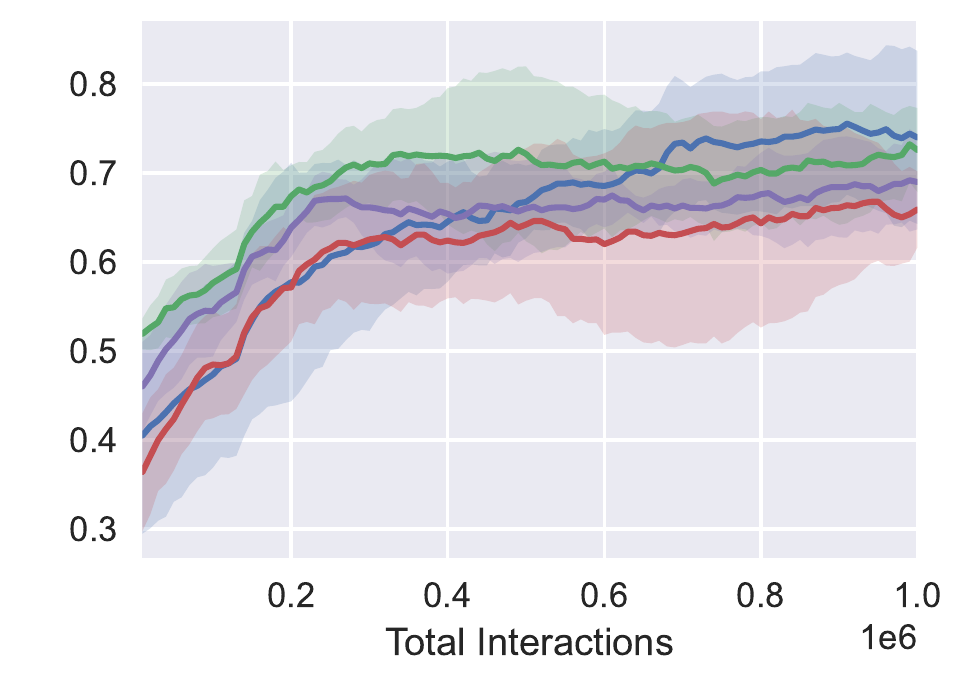}}
    \subcaptionbox{Training Cost rate}
        {\includegraphics[width=0.23\linewidth,trim=20 20 0 0,clip]{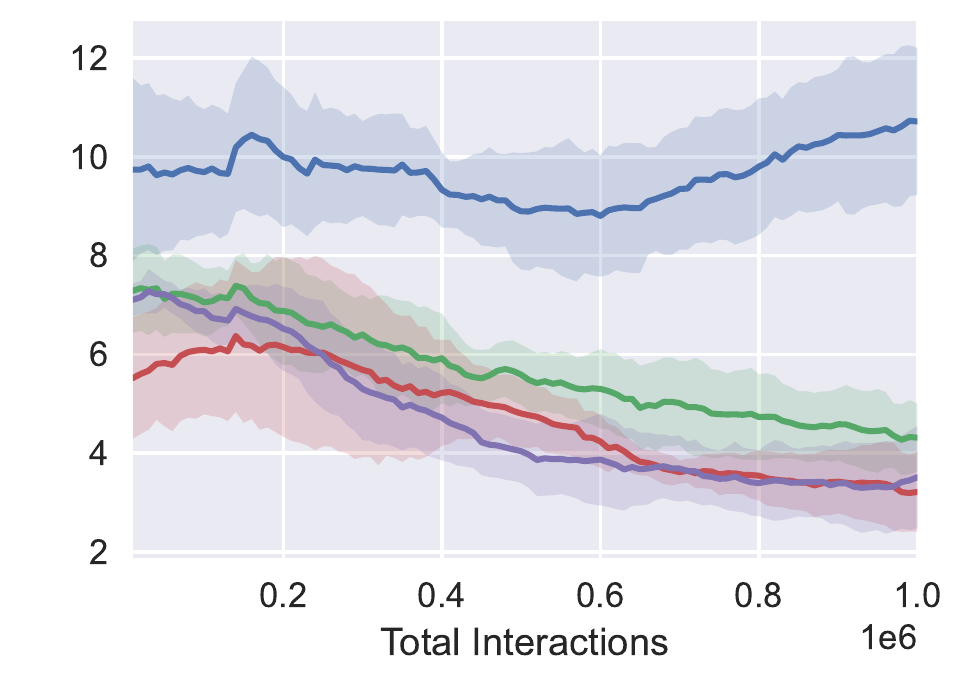}}
        \vspace{-0.15cm}
      \caption{Sensitivity studies of Exact Penalty Optimization. The first two figures are reward and cost plots of EPO on the SpeedLimit task with different penalty factors $\kappa$. The last two figures are the success rate and cost plots of EPO on the MetaDrive benchmark with different penalty factors $\kappa$.}
      \label{fig:learing_curves_sa2}
      \vspace{-0.25cm}
\end{figure*}

\clearpage

{
\small
\bibliography{example_paper}

\begin{thebibliography}{28}
\providecommand{\natexlab}[1]{#1}
\providecommand{\url}[1]{\texttt{#1}}
\expandafter\ifx\csname urlstyle\endcsname\relax
  \providecommand{\doi}[1]{doi: #1}\else
  \providecommand{\doi}{doi: \begingroup \urlstyle{rm}\Url}\fi

\bibitem[Achiam et~al.(2017)Achiam, Held, Tamar, and
  Abbeel]{achiam2017constrained}
Achiam, J., Held, D., Tamar, A., and Abbeel, P.
\newblock Constrained policy optimization.
\newblock In \emph{International Conference on Machine Learning}, pp.\  22--31.
  PMLR, 2017.

\bibitem[Altman(1999)]{altman1999constrained}
Altman, E.
\newblock \emph{Constrained Markov decision processes}, volume~7.
\newblock CRC Press, 1999.

\bibitem[Amodei et~al.(2016)Amodei, Olah, Steinhardt, Christiano, Schulman, and
  Man{\'e}]{amodei2016concrete}
Amodei, D., Olah, C., Steinhardt, J., Christiano, P., Schulman, J., and
  Man{\'e}, D.
\newblock Concrete problems in ai safety.
\newblock \emph{arXiv preprint arXiv:1606.06565}, 2016.

\bibitem[Chen et~al.(2021)Chen, Francis, Nyberg, and Herbert]{chen2021safe}
Chen, B., Francis, J., Nyberg, J. O.~E., and Herbert, S.~L.
\newblock Safe autonomous racing via approximate reachability on ego-vision.
\newblock \emph{arXiv preprint arXiv:2110.07699}, 2021.

\bibitem[Chow et~al.(2017)Chow, Ghavamzadeh, Janson, and Pavone]{chow2017risk}
Chow, Y., Ghavamzadeh, M., Janson, L., and Pavone, M.
\newblock Risk-constrained reinforcement learning with percentile risk
  criteria.
\newblock \emph{The Journal of Machine Learning Research}, 18\penalty0
  (1):\penalty0 6070--6120, 2017.

\bibitem[Dalal et~al.(2018)Dalal, Dvijotham, Vecerik, Hester, Paduraru, and
  Tassa]{dalal2018safe}
Dalal, G., Dvijotham, K., Vecerik, M., Hester, T., Paduraru, C., and Tassa, Y.
\newblock Safe exploration in continuous action spaces.
\newblock \emph{arXiv preprint arXiv:1801.08757}, 2018.

\bibitem[Fujimoto et~al.(2018)Fujimoto, Hoof, and
  Meger]{fujimoto2018addressing}
Fujimoto, S., Hoof, H., and Meger, D.
\newblock Addressing function approximation error in actor-critic methods.
\newblock In \emph{International conference on machine learning}, pp.\
  1587--1596. PMLR, 2018.

\bibitem[Ha et~al.(2020)Ha, Xu, Tan, Levine, and Tan]{ha2020learning}
Ha, S., Xu, P., Tan, Z., Levine, S., and Tan, J.
\newblock Learning to walk in the real world with minimal human effort.
\newblock \emph{arXiv preprint arXiv:2002.08550}, 2020.

\bibitem[Herman et~al.(2021)Herman, Francis, Ganju, Chen, Koul, Gupta,
  Skabelkin, Zhukov, Kumskoy, and Nyberg]{herman2021learn}
Herman, J., Francis, J., Ganju, S., Chen, B., Koul, A., Gupta, A., Skabelkin,
  A., Zhukov, I., Kumskoy, M., and Nyberg, E.
\newblock Learn-to-race: A multimodal control environment for autonomous
  racing.
\newblock In \emph{Proceedings of the IEEE/CVF International Conference on
  Computer Vision}, pp.\  9793--9802, 2021.

\bibitem[Isele et~al.(2018)Isele, Nakhaei, and Fujimura]{isele2018safe}
Isele, D., Nakhaei, A., and Fujimura, K.
\newblock Safe reinforcement learning on autonomous vehicles.
\newblock In \emph{2018 IEEE/RSJ International Conference on Intelligent Robots
  and Systems (IROS)}, pp.\  1--6. IEEE, 2018.

\bibitem[Kiran et~al.(2021)Kiran, Sobh, Talpaert, Mannion, Al~Sallab, Yogamani,
  and P{\'e}rez]{kiran2021deep}
Kiran, B.~R., Sobh, I., Talpaert, V., Mannion, P., Al~Sallab, A.~A., Yogamani,
  S., and P{\'e}rez, P.
\newblock Deep reinforcement learning for autonomous driving: A survey.
\newblock \emph{IEEE Transactions on Intelligent Transportation Systems}, 2021.

\bibitem[Li et~al.(2021)Li, Peng, Xue, Zhang, and Zhou]{li2021metadrive}
Li, Q., Peng, Z., Xue, Z., Zhang, Q., and Zhou, B.
\newblock Metadrive: Composing diverse driving scenarios for generalizable
  reinforcement learning.
\newblock \emph{arXiv preprint arXiv:2109.12674}, 2021.

\bibitem[Li et~al.(2022)Li, Peng, and Zhou]{li2022efficient}
Li, Q., Peng, Z., and Zhou, B.
\newblock Efficient learning of safe driving policy via human-ai copilot
  optimization.
\newblock \emph{arXiv preprint arXiv:2202.10341}, 2022.

\bibitem[Liu et~al.(2021)Liu, Ding, and Liu]{liu2021resource}
Liu, Y., Ding, J., and Liu, X.
\newblock Resource allocation method for network slicing using constrained
  reinforcement learning.
\newblock In \emph{2021 IFIP Networking Conference (IFIP Networking)}, pp.\
  1--3. IEEE, 2021.

\bibitem[Luenberger et~al.(1984)Luenberger, Ye, et~al.]{luenberger1984linear}
Luenberger, D.~G., Ye, Y., et~al.
\newblock \emph{Linear and nonlinear programming}, volume~2.
\newblock Springer, 1984.

\bibitem[Ma et~al.(2021)Ma, Guan, Li, Zhang, Zheng, and Chen]{ma2021feasible}
Ma, H., Guan, Y., Li, S.~E., Zhang, X., Zheng, S., and Chen, J.
\newblock Feasible actor-critic: Constrained reinforcement learning for
  ensuring statewise safety.
\newblock \emph{arXiv preprint arXiv:2105.10682}, 2021.

\bibitem[Mnih et~al.(2015)Mnih, Kavukcuoglu, Silver, Rusu, Veness, Bellemare,
  Graves, Riedmiller, Fidjeland, Ostrovski, et~al.]{mnih2015human}
Mnih, V., Kavukcuoglu, K., Silver, D., Rusu, A.~A., Veness, J., Bellemare,
  M.~G., Graves, A., Riedmiller, M., Fidjeland, A.~K., Ostrovski, G., et~al.
\newblock Human-level control through deep reinforcement learning.
\newblock \emph{nature}, 518\penalty0 (7540):\penalty0 529--533, 2015.

\bibitem[Paternain et~al.(2022)Paternain, Calvo-Fullana, Chamon, and
  Ribeiro]{paternain2022safe}
Paternain, S., Calvo-Fullana, M., Chamon, L.~F., and Ribeiro, A.
\newblock Safe policies for reinforcement learning via primal-dual methods.
\newblock \emph{IEEE Transactions on Automatic Control}, 2022.

\bibitem[Ray et~al.(2019)Ray, Achiam, and Amodei]{ray2019benchmarking}
Ray, A., Achiam, J., and Amodei, D.
\newblock Benchmarking safe exploration in deep reinforcement learning.
\newblock \emph{arXiv preprint arXiv:1910.01708}, 7:\penalty0 1, 2019.

\bibitem[Sutton \& Barto(1998)Sutton and Barto]{sutton2018reinforcement}
Sutton, R.~S. and Barto, A.~G.
\newblock \emph{Reinforcement learning: An introduction}.
\newblock MIT press, 1998.

\bibitem[Thananjeyan et~al.(2021)Thananjeyan, Balakrishna, Nair, Luo,
  Srinivasan, Hwang, Gonzalez, Ibarz, Finn, and
  Goldberg]{thananjeyan2021recovery}
Thananjeyan, B., Balakrishna, A., Nair, S., Luo, M., Srinivasan, K., Hwang, M.,
  Gonzalez, J.~E., Ibarz, J., Finn, C., and Goldberg, K.
\newblock Recovery rl: Safe reinforcement learning with learned recovery zones.
\newblock \emph{IEEE Robotics and Automation Letters}, 6\penalty0 (3):\penalty0
  4915--4922, 2021.

\bibitem[Vinyals et~al.(2019)Vinyals, Babuschkin, Czarnecki, Mathieu, Dudzik,
  Chung, Choi, Powell, Ewalds, Georgiev, et~al.]{vinyals2019grandmaster}
Vinyals, O., Babuschkin, I., Czarnecki, W.~M., Mathieu, M., Dudzik, A., Chung,
  J., Choi, D.~H., Powell, R., Ewalds, T., Georgiev, P., et~al.
\newblock Grandmaster level in starcraft ii using multi-agent reinforcement
  learning.
\newblock \emph{Nature}, 575\penalty0 (7782):\penalty0 350--354, 2019.

\bibitem[Wen et~al.(2020)Wen, Duan, Li, Xu, and Peng]{wen2020safe}
Wen, L., Duan, J., Li, S.~E., Xu, S., and Peng, H.
\newblock Safe reinforcement learning for autonomous vehicles through parallel
  constrained policy optimization.
\newblock In \emph{2020 IEEE 23rd International Conference on Intelligent
  Transportation Systems (ITSC)}, pp.\  1--7. IEEE, 2020.

\bibitem[Yang et~al.(2022)Yang, Zhang, Luu, Ha, Tan, and Yu]{yang2022safe}
Yang, T.-Y., Zhang, T., Luu, L., Ha, S., Tan, J., and Yu, W.
\newblock Safe reinforcement learning for legged locomotion.
\newblock \emph{arXiv preprint arXiv:2203.02638}, 2022.

\bibitem[Yuan et~al.(2021)Yuan, Hall, Zhou, Brunke, Greeff, Panerati, and
  Schoellig]{yuan2021safe}
Yuan, Z., Hall, A.~W., Zhou, S., Brunke, L., Greeff, M., Panerati, J., and
  Schoellig, A.~P.
\newblock safe-control-gym: a unified benchmark suite for safe learning-based
  control and reinforcement learning.
\newblock \emph{arXiv preprint arXiv:2109.06325}, 2021.

\bibitem[Zhang et~al.(2022)Zhang, Shen, Yang, Chen, Yuan, Wang, and
  Tao]{Zhang2022PenalizedPP}
Zhang, L., Shen, L., Yang, L., Chen, S.-Y., Yuan, B., Wang, X., and Tao, D.
\newblock Penalized proximal policy optimization for safe reinforcement
  learning.
\newblock \emph{arXiv preprint arXiv:2205.11814}, 2022.

\bibitem[Zhang et~al.(2020)Zhang, Vuong, and Ross]{zhang2020first}
Zhang, Y., Vuong, Q., and Ross, K.~W.
\newblock First order constrained optimization in policy space.
\newblock \emph{arXiv preprint arXiv:2002.06506}, 2020.

\bibitem[Zhao et~al.(2021)Zhao, He, and Liu]{zhao2021model}
Zhao, W., He, T., and Liu, C.
\newblock Model-free safe control for zero-violation reinforcement learning.
\newblock In \emph{5th Annual Conference on Robot Learning}, 2021.

\end{thebibliography}
\bibliographystyle{icml2022}
}



\end{document}